\newtheorem{lemma}{Lemma}
\newtheorem{theorem}{Theorem}
\newtheorem{assumption}{Assumption}
\DeclareMathOperator*{\argmin}{arg\,min}
\title{Weighted Sampling for Combined Model Selection and Hyperparameter Tuning}
\author{Dimitrios Sarigiannis, Thomas Parnell, Haralampos Pozidis\\
IBM Research\\
S{\"a}umerstrasse 4, 8803 R{\"u}schlikon, Switzerland\\
saridimi@gmail.com, tpa@zurich.ibm.com, hap@zurich.ibm.com 
}
\begin{document}

\maketitle

\begin{abstract}
The combined algorithm selection and hyperparameter tuning (CASH) problem is characterized by large hierarchical hyperparameter spaces. 
Model-free hyperparameter tuning methods can explore such large spaces efficiently since they are highly parallelizable across multiple machines. 
When no prior knowledge or meta-data exists to boost their performance, these methods commonly sample random configurations following a uniform distribution. 
In this work, we propose a novel sampling distribution as an alternative to uniform sampling and prove theoretically that it has a better chance of finding the best configuration in a worst-case setting.
In order to compare competing methods rigorously in an experimental setting, one must perform statistical hypothesis testing.
We show that there is little-to-no agreement in the automated machine learning literature regarding which methods should be used.
We contrast this disparity with the methods recommended by the broader statistics literature, and identify a suitable approach.
We then select three popular model-free solutions to CASH and evaluate their performance, with uniform sampling as well as the proposed sampling scheme, across 67 datasets from the OpenML platform.
We investigate the trade-off between exploration and exploitation across the three algorithms, and verify empirically that the proposed sampling distribution improves performance in all cases.
\end{abstract}

\section{Introduction}
In recent years, it has become apparent that the proliferation of skills required to develop end-to-end machine learning solutions has not kept pace with the rapid growth in expectations surrounding the potential of such techniques. 
This imbalance has led to the desire for automated machine learning (AutoML) systems.
The goal of an AutoML system is to automate the entire process of building a machine learning model.
Typically, this includes data cleaning, data pre-processing, feature engineering, model selection, hyperparameter tuning and in some cases even ensemble-building. 
In recent years, software frameworks such as Auto-WEKA \cite{Thornton:2013:ACS:2487575.2487629}, auto-sklearn \cite{NIPS2015_5872}, TPOT \cite{pmlr-v64-olson_tpot_2016} and H2O Driverless AI \cite{driverless} have appeared that offer exactly this functionality. 

\paragraph{The CASH Problem}
At the heart of any automated machine learning framework is an optimization problem over a large hierarchical parameter space. 
A machine learning solution is typically structured as a pipeline comprising multiple components.
Each component has a different set of hyperparameters that must be tuned to achieve best-in-class accuracy. 
In this work, we focus on a restricted set of such pipelines, considering solely the problem of \textbf{C}ombined \textbf{A}lgorithm \textbf{S}election and \textbf{H}yperparameter tuning (henceforth referred to as CASH). Specifically, this is a joint optimization problem involving selecting which machine learning model to use (e.g. random forest vs. gradient-boosted decision trees) and how to tune the model hyperparameters. The hierarchical configuration space associated with the CASH problem is illustrated in Figure \ref{fig:configspace}. Mathematically, the problem of finding the optimal model $\lambda^*$ and its corresponding hyperparameter configuration $\alpha^*$ is defined:
\begin{equation}
	\left(\lambda^*, \alpha^*\right) = \argmin_{\lambda\in\{1,\ldots,M\}, \alpha\in\mathcal{A}(\lambda)} \mathcal{L}_{valid}(\lambda, \alpha),\label{eq:cash}
\end{equation} 
where $M$ is the total number of models, $\mathcal{A}(\lambda)$ is the set of all possible hyperparameter configurations for model $\lambda$ and $\mathcal{L}_{valid}$ is a loss function evaluated over a validation dataset. Crucially, it is important that the solution found also generalizes to unseen data, which is typically verified by evaluating the same loss function over a test set unseen to the optimization routine.

\paragraph{Model-based vs. Model-free optimization.}
Solutions for solving equation (\ref{eq:cash}) fall roughly into two categories: model-based and model-free. 
Model-based approaches are typically based on Bayesian optimization \cite{Hutter:2011:SMO:2177360.2177404,NIPS2011_4443}: models and their hyperparameter configurations (henceforth jointly referred simply as a configuration) are evaluated sequentially, whereby the next configuration to try is determined by maximizing some acquisition function involving a surrogate model (often based on Gaussian processes). 
While model-based approaches often lead to superior accuracy, such methods are inherently difficult to parallelize and maximization of the acquisition function and/or updating the surrogate model can often be very slow. 
On the other hand, model-free approaches \cite{Bergstra:2012:RSH:2188385.2188395,jamieson2014best,pmlr-v51-jamieson16,DBLP:journals/corr/LiJDRT16} simply involve randomly sampling configurations and evaluating them, possibly under what is known as a resource constraint. 
These methods are inherently easy to parallelize and distribute across a cluster of machines, and make no assumptions on the structure of the underlying space. 
For this reason, in this paper we focus on model-free methods.
In particular, we focus on how the sampling distribution affects the accuracy of the resulting solution.

\paragraph{Comparing AutoML methods}
A major challenge in the AutoML field is how different schemes should be compared. There may exist datasets for which the differences between competing methods are relatively small, and others where differences are very big. It is crucial to have a consistent way to measure the differences between methods and be able to perform rigorous statistical hypothesis testing on those measurements, to ensure that differences observed are truly significant. Despite the importance of this challenge, there is little discussion in the AutoML literature on this topic and a wide disparity in the evaluation methodology and the statistical techniques applied. Addressing this problem is another focus of this paper. 

\paragraph{Contributions}
The contributions of this paper are as follows:
\begin{itemize}
	\item We propose a novel sampling distribution as an alternative to uniform sampling and prove theoretically that it has a better chance of finding the best configuration in a worst-case setting.
	\item We select three popular model-free hyperparameter tuning algorithms and perform a large empirical study, using 67 datasets from OpenML \cite{OpenML2013}, with uniform sampling as well as the proposed scheme.
	\item We review the state-of-the-art in statistical hypothesis testing in the context of machine learning, and propose a systematic approach for comparing multiple AutoML methods across a collection of datasets.
	\item We investigate the trade-off between exploration and exploitation across the three algorithms and verify empirically that the proposed sampling distribution improves average rank performance in all cases.
\end{itemize}

\begin{figure}[!htb]
    \center{\includegraphics[width=\columnwidth]{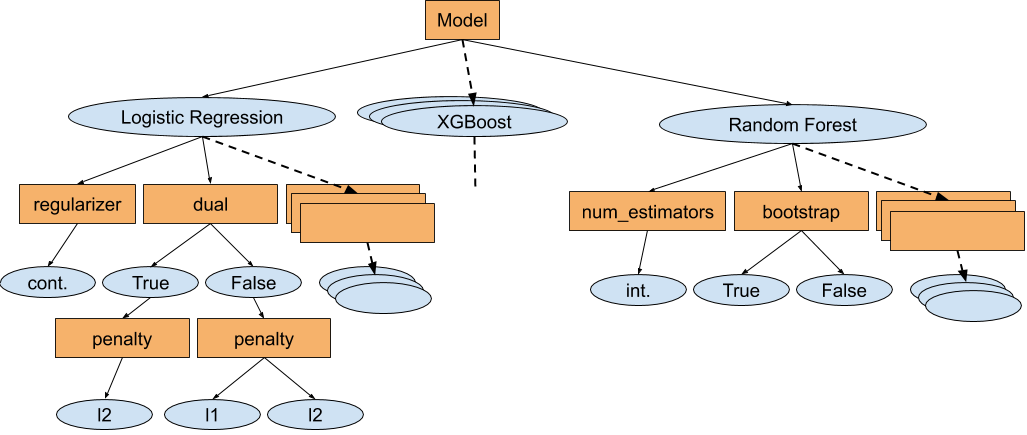}}
    \caption{\label{fig:configspace} Hierarchical configuration space of the models and their corresponding hyperparameters.}
\end{figure}

\section{Model-free Optimization for CASH} 
\label{methods}

In this section we describe three popular model-free methods that can be applied to solve the CASH problem.

\subsection{Random Search}

The simplest and most well-known model-free approach for solving (\ref{eq:cash}) is random search (RS) \cite{Bergstra:2012:RSH:2188385.2188395}. The method is very simple: one draws $n$ different configurations (models and their hyperparameters) at random according to some sampling distribution. Each configuration is then trained on the full training set, and the configuration that results in the minimal validation loss is selected as the winner. Since all evaluations are independent, RS is embarrassingly parallel. Despite its simplicity, it is widely acknowledged that RS is a competitive baseline for AutoML optimization \cite{li2019random}.

\subsection{Successive Halving}

The next model-free approach we consider is successive halving (SH) \cite{jamieson2014best,pmlr-v51-jamieson16}. 
Crucial to this method is the notion of a \textit{resource}. 
Namely, some quantity that if lowered, will reduce the training time, and if raised will increase the training time. 
For iterative learning algorithms (e.g. stochastic gradient descent), an appropriate resource would be the number of iterations. 
When considering the CASH problem, where hyperparameters across different models are not consistent (e.g. learning algorithms may not all be iterative), we must define resource in some alternative manner.
Following the approach of \cite{DBLP:journals/corr/LiJDRT16} we define the resource to be the size of a stratified subsample of the training dataset, thus overcoming the issue of model heterogeneity.

With this definition in hand, SH begins by randomly sampling $n_0$ configurations according to some sampling distribution and evaluates them using a minimum resource $r_{min}$ (for instance $r_{min}=0.1$ implies $10\%$ of the training examples). 
The algorithm then identifies the best $n_0\eta^{-1}$ configurations and carries them over into the next \textit{rung} where said configurations are evaluated using a resource of $r_{min}\eta$.
The parameter $\eta$ is a hyperparameter of the SH method that controls how aggressively configurations are eliminated. 
The method continues as above, reducing the number of configurations by a power of $\eta$, and increasing the resource by a power of $\eta$ until the maximal resource is attained, which in the CASH context corresponds to a resource of one (i.e., all of the training examples).
In total, the number of rungs is given by $1 + s_{max}$ where $s_{max} = \lfloor -\log_{\eta}(r_{min})\rfloor$.
In order to ensure there is at least one configuration evaluated with the maximal resource we also require that $n_0 \geq \eta^{s_{max}}$.

The steps of the SH are provided in full in Algorithm \ref{SH_Algorithm}.
It should be noted that within each rung, all evaluations can be executed in parallel, whereas between rungs there exist dependencies. 
However, there exists some recent work that tries to effectively overcome these constraints to achieve more efficient parallel and distributed implementations of SH \cite{DBLP:journals/corr/abs-1810-05934}.

\begin{algorithm}[h]
\caption{Successive Halving}
\label{SH_Algorithm}
\begin{algorithmic}[1]
\Require{initial number of configurations $n_0$, minimum resource $r_{min}$, scaling factor $\eta$, sampling distribution $p(\lambda,\alpha)$}
\State $s_{max} \gets \lfloor -\log_{\eta}(r_{min})\rfloor$
\Ensure{$n_0 >= \eta^{s_{max}}$}
\Statex
\State $T \gets sample\_configurations(n, p(\lambda,\alpha))$
\For{$i \in \{0,1,...,s_{max}\}$}                    
\State $n_i \gets \lfloor n \eta^{-i}\rfloor$
\State $r_i \gets \eta^{-s_{max}+i}$
\State $L \gets eval\_and\_return\_val\_loss(\theta, r_i):\theta \in T$
\State $T \gets top\_k(T,L,n_i/\eta)$
\EndFor
\Statex
\State \Return {Configuration with the smallest intermediate loss seen so far in T}
\end{algorithmic}
\end{algorithm}

In order to compare SH against RS and other methods, we need to define the notion of a \textit{budget}. 
The budget is defined to be the total amount of resource used (i.e., effective number of times the full training data is processed), and is given mathematically as:
\begin{equation}
b_{SH} = \sum_{i=0}^{s_{max}} n_i r_i,
\label{eqn:sh}
\end{equation}
where $n_i$ and $r_i$ are as defined in Algorithm \ref{SH_Algorithm}. 
By comparison, the budget for RS is simply the number of configurations evaluated.

\paragraph{Exploitation vs. Exploration} Let us assume that $\eta=3$ is fixed, and observe that by selecting different values for the minimum resource $r_{min}$ and the initial number of configurations $n_0$, it is possible to obtain instances of SH with an equivalent budget, but with significantly different elimination schedules. 
In the corner-case, if we set $r_{min}=1.0$ and $n_0=n$, we obtain a schedule ($SH_0$) that is equivalent to RS with budget $n$. 
Because this schedule only evaluates configurations using the maximal resource, we refer to it as the most \textit{exploitative} schedule.
On the other hand, if we set $r_{min}=1/9$ and $n_0=3n$, we obtain a different schedule ($SH_2$) which has the same budget: 
\begin{equation}
	B_{SH_2} = 3n\times\frac{1}{9} + n\times\frac{1}{3} + \frac{n}{3}\times 1 = n. 
\end{equation}
Since this schedule is able to evaluate many more configurations in the initial rung (albeit with reduced resource), we refer to this schedule as a more \textit{explorative} schedule. 
Three different such schedules are illustrated in Table \ref{tab:brackets}.

\begin{table}[ht]
\begin{center}
\begin{tabular}{cccccc}
    \hline
    \multicolumn{2}{c}{$SH_2$} & \multicolumn{2}{c}{$SH_1$} & \multicolumn{2}{c}{$SH_0$}\\
    $n_i$&$r_i$ & $n_i$&$r_i$ & $n_i$&$r_i$ \\
    $3n$ & $1/9$ & $3n/2$ & $1/3$ & $n$ & $1$ \\
    $n$ & $1/3$ & $n/2$ & $1$ & -&- \\
    $n/3$  & $1$  & -&- & -&- \\
    \hline
\end{tabular}
\end{center}
\caption{Three SH schedules with equivalent budget ranging from the most explorative ($SH_2$) to the most exploitative ($SH_0$).}
\label{tab:brackets}
\end{table}

\subsection{Hyperband}
One of the main issues with SH is how a practitioner should decide which of the aforementioned schedules to use. There may be some optimization problems where a more exploitative schedule performs better, and others where a more explorative schedule is desirable.
This is exactly the problem considered in \cite{DBLP:journals/corr/LiJDRT16}, in which the authors propose a new method, Hyperband, that tries to seamlessly handle the exploration-exploitation trade-off.
The main concept is very simple: execute a number of SH schedules in parallel (with varying values for $r_{min}$) and output the best configuration found by any of them.
Each instance of SH is referred to as a \textit{bracket}.
To give a concrete example, a Hyperband instance would execute the three brackets from Table \ref{tab:brackets} ($SH_0$, $SH_1$ and $SH_2$) in parallel. 
The budget of such a scheme is given by:
\begin{equation}
b_{HB} = \sum_{i=0}^{2} b_{SH_i} = 3n.
\end{equation}
Therefore, in order to fairly compare this instance of Hyperband with RS, the RS should sample $3n$ configurations and evaluate them on the full resource.

\section{Weighted Sampling of Models} 
\label{theory}

\subsection{Main Idea}
The model-free approaches for solving CASH discussed in the previous section generally involve sampling models (and their hyperparameters) according to a uniform distribution. 
While there have been some efforts to alter this distribution via adaptive Bayesian linear regression models \cite{valkovsimple}, such methods perform best when a large amount of meta-data is available.
Using meta-data to improve SH was also considered in \cite{sommer2019}, in which the sampling distribution remains uniform, but the decision regarding which configurations to eliminate is taken using a meta-model trained on existing tuning experiments.
In such an approach, the evaluations at earlier rungs are akin to so-called landmarking meta-features \cite{Pfahringer:2000:MLV:645529.658105}.
While this line of research is promising, in practice such meta-data may not exist and may be costly to obtain.
Instead, in this paper we propose a non-uniform sampling distribution based on a very simple heuristic: as the number of model hyperparameters increases linearly, one requires exponentially more budget (or model evaluations) in order to identify the optimal configuration. Mathematically, we propose to sample model $\lambda$ according to the following probability:
\begin{equation}
	p_\lambda = \frac{2^{N_\lambda}}{\sum_{\lambda'=1}^M 2^{N_{\lambda'}}},\label{eq:heuristic}
\end{equation}
where $N_\lambda\in\mathbb{Z}_+$ is the number of hyperparameters of model $\lambda$.
In the rest of this section, we will provide a theoretical motivation for such a sampling distribution.

\subsection{Theoretical Motivation}

In this section we prove that, under a worst-case scenario for CASH, there exists a weighted sampling scheme that consistently outperforms the uniform sampling scheme. The worst-case scenario that we consider is that for any new dataset, the optimal model is drawn uniformly at random, and the optimal configuration of that model is similarly drawn uniformly at random. We assume that each model has an integer number of continuous hyperparameters. We formalize this scenario with the following assumptions:
\begin{assumption} \label{a1} The optimal model $\lambda^*_i$ is a random variable distributed according to a discrete uniform distribution:
	\begin{equation*}
		\lambda^*_i \sim U\{1,M\},
	\end{equation*}
	where $M$ is the total number of models. 
\end{assumption}

\begin{assumption} \label{a2} Given model $\lambda\in\{1,\ldots,M\}$, the optimal configuration of the $n$-th hyperparameter of the model is a random variable distributed accordingly to a continuous uniform distribution. That is, for $n=1,2,\ldots,N_\lambda$:
	\begin{equation*}
	\alpha(\lambda)^*_{i,n} \sim U(l\left(\lambda)_n, u(\lambda)_n\right),
	\end{equation*}
where $l(\lambda)_n\leq u(\lambda)_n\in\mathbb{R}$ are the lower and upper limits of the $n$-th hyperparameter of model $\lambda$ respectively.
\end{assumption}

We will now prove the following lemma regarding the performance of RS under a given model-sampling scheme. We assume that the probability of the RS sampling model $\lambda$ is given by $p_\lambda$ and all hyperparameters of all models are sampled uniformly from their ranges.
\begin{lemma}\label{lemma:prob_rs}
	Under the assumptions above, the probability of a RS with budget $K$ not finding the optimal model $\lambda^*_i$ and configuration $\alpha^*_{i,1},\ldots,\alpha^*_{i,N_{\lambda^*_i}}$ is given by:
	\begin{equation}
		P_F = \frac{1}{M}\sum_{\lambda=1}^M\left(1 - \frac{p_{\lambda}}{\theta_\lambda}\right)^K,
	\end{equation}
	where $\theta_\lambda$ is given by:
	\begin{equation}
		\theta_\lambda = \prod_{n=1}^{N_\lambda}\left(u(\lambda)_n - l(\lambda)_n\right)
	\end{equation}
\end{lemma}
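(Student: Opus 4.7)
The plan is to decompose $P_F$ by conditioning on the identity of the optimal model and then exploit the independence of the $K$ random search trials. Before starting, I would flag a convention: since the hyperparameter distributions in Assumption~\ref{a2} are continuous, the event that a single trial returns exactly $(\lambda^*,\alpha^*)$ has measure zero in the strict sense. The stated formula therefore has to be read with the convention that $1/\theta_\lambda$ denotes the density of the uniform configuration distribution at the optimum, equivalently, the probability of landing in a unit-volume cell containing $\alpha^*$. Making this convention explicit at the top of the proof renders the ensuing arithmetic unambiguous.

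Given the convention, the proof is short. By Assumption~\ref{a1}, $\Pr(\lambda^*_i = \lambda) = 1/M$, so the law of total probability gives
\begin{equation*}
P_F \;=\; \frac{1}{M}\sum_{\lambda=1}^M \Pr\bigl(\text{all $K$ trials miss} \,\bigm|\, \lambda^*_i = \lambda\bigr).
\end{equation*}
For a single trial conditioned on $\lambda^*_i = \lambda$, success requires two independent events: the sampled model equals $\lambda$ (probability $p_\lambda$), and the sampled hyperparameter vector matches $\alpha^*$ (probability $1/\theta_\lambda$ under the unit-cell convention, by Assumption~\ref{a2} applied to both the sampler and the random optimum, which live in the same box of volume $\theta_\lambda$). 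The per-trial success probability is therefore $p_\lambda/\theta_\lambda$, and the miss probability is $1 - p_\lambda/\theta_\lambda$. Since the $K$ random search trials are i.i.d., the conditional probability that all of them miss is $(1 - p_\lambda/\theta_\lambda)^K$, which, substituted back, yields the claimed expression.

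One small internal point worth checking is that the per-trial match probability does not depend on \emph{where} in the box the random optimum $\alpha^*$ lies; this follows because the sampling density is uniformly $1/\theta_\lambda$ at every interior point, so the answer is the same before and after integrating out $\alpha^*$. The only genuine obstacle is the one flagged in the first paragraph: being explicit about the continuous-versus-discrete reading of $1/\theta_\lambda$, which needs to be stated so that the product $p_\lambda/\theta_\lambda$ is a bona fide per-trial success probability. Everything else is routine bookkeeping with conditioning and independence.
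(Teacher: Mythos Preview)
Your proposal is correct and follows essentially the same route as the paper: condition on the optimal model via Assumption~\ref{a1}, use i.i.d.\ trials to get the $K$-th power, and identify the per-trial hit probability as $p_\lambda/\theta_\lambda$; the paper carries out the integration over $\alpha^*$ explicitly where you simply observe the integrand is constant, which is the same argument. Your explicit flagging of the measure-zero issue is, if anything, a clarification the paper omits, since it too writes expressions like $\Pr(\alpha^{(k)}_{i,n}=\alpha_n)$ and silently treats them as densities.
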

\begin{proof}
	The probability of failure for a RS with budget $K$ can be expressed:
	\begin{equation}
		P_F = \Pr\left(\bigcap_{k=1}^K\left[\lambda^{(k)}_i = \lambda^*_i, \bigcap_{n=1}^{N_{\lambda^*_i}} \alpha^{(k)}_{i,n} = \alpha^*_{i,n} \right]^c\right)\nonumber,
	\end{equation}
	where $\lambda^{(k)}_i$ is the $k$-th model sampled by the RS for dataset $i$ and $\alpha^{(k)}_{i,n}$ is the $k$-th sampled value of the $n$-th hyperparameter. Note that explicit dependence of the hyperparameter random variable on the selected model has been dropped from notation to ease readability.
	By applying total and conditional laws of probability as well as Assumption (\ref{a1}) and Assumption (\ref{a2}) we obtain:
	\begin{align*}
		&P_F = \frac{1}{M}\sum_{\lambda=1}^M \prod_{n=1}^{N_\lambda}\int_{l(\lambda)_n}^{u(\lambda)_n} d\alpha_n \frac{1}{u(\lambda)_n - l(\lambda)_n} \prod_{k=1}^K  \\
		&\Pr\left( \left[\lambda^{(k)}_i = \lambda, \bigcap_{n=1}^{N_{\lambda}} \alpha^{(k)}_{i,n} = \alpha_n \right]^c \mid \lambda_i^*=\lambda, \bigcap_{n=1}^{N_{\lambda}} \alpha^*_{i,n} = \alpha_n \right)
	\end{align*}
	Now, by introducing the sampling distribution on the model and assuming the hyperparameters of each model are sampled uniformly within their ranges, we obtain:
	\begin{align*}
	P_F &= \frac{1}{M}\sum_{\lambda=1}^M \left(1 - p_\lambda\prod_{n=1}^{N_{\lambda}}\frac{1}{u(\lambda)_n-l(\lambda)_n}\right)^K \\ 	\times    &\prod_{n=1}^{N_\lambda}\int_{l(\lambda)_n}^{u(\lambda)_n} d\alpha_n \frac{1}{u(\lambda)_n - l(\lambda)_n} \\
	&= \frac{1}{M}\sum_{\lambda=1}^M \left(1 - \frac{p_\lambda}{\theta_\lambda}\right)^K \qedhere
	\end{align*}
\end{proof}

Applying Lemma \ref{lemma:prob_rs} the above to the uniform sampling case where $p_\lambda = 1/M$ we find that the probability of failure can be expressed:
\begin{equation}\label{eq:pf_u}
	P_F^{(U)} = \frac{1}{M}\sum_{\lambda=1}^M \left(1-\frac{1}{M\theta_\lambda}\right)^K
\end{equation}
Now, consider a RS with a non-uniform sampling probability is given by:
\begin{equation}
	p_\lambda = \frac{\theta_\lambda}{\sum_{\lambda'=1}^M\theta_{\lambda'}}\label{eq:theta_samp}
\end{equation}
the probability of failure can be expressed:
\begin{equation}\label{eq:pf_nu}
	P_F^{(W)} = \left(1-\frac{1}{\sum_{\lambda=1}^M \theta_\lambda}\right)^K
\end{equation}
\begin{theorem}
	Unless $\theta_1=\theta_2=\ldots=\theta_M$, the probability of failure of the RS with weighted sampling is strictly less than that of the RS with uniform sampling.
\end{theorem}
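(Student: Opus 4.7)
The plan is to exploit the convexity of $f(x) = (1-x)^K$ on $[0,1]$ via Jensen's inequality, since both $P_F^{(U)}$ and $P_F^{(W)}$ as given in (\ref{eq:pf_u}) and (\ref{eq:pf_nu}) are expressions of $f$ evaluated at different per-iteration ``success rates''. A quick check of $f'' = K(K-1)(1-x)^{K-2}$ shows strict convexity for $K \geq 2$, and $f$ is strictly decreasing on $[0,1]$; the $K=1$ case is linear and can be handled by inspection. The equality clause of Jensen should naturally translate into the ``unless $\theta_1=\ldots=\theta_M$'' hypothesis of the theorem, so handling the equality conditions carefully at each step is part of the plan.

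Concretely, I would rewrite $P_F^{(U)} = \frac{1}{M}\sum_\lambda f\!\left(\frac{1}{M\theta_\lambda}\right)$ and look for weights $w_\lambda$ whose convex combination of the arguments $\frac{1}{M\theta_\lambda}$ lands exactly on $\frac{1}{S}$, where $S = \sum_{\lambda'} \theta_{\lambda'}$, so that weighted Jensen produces $f(1/S) = P_F^{(W)}$ on the right-hand side. The weights $w_\lambda = \theta_\lambda / S$ accomplish this cleanly, since $\sum_\lambda \frac{\theta_\lambda}{S}\cdot\frac{1}{M\theta_\lambda} = \frac{1}{S}$ (the averaging cancels the $\theta_\lambda$ in the denominator), yielding the preliminary bound
\begin{equation*}
\sum_{\lambda=1}^M \frac{\theta_\lambda}{S}\, f\!\left(\frac{1}{M\theta_\lambda}\right) \;\geq\; f\!\left(\frac{1}{S}\right) \;=\; P_F^{(W)}.
\end{equation*}

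The main obstacle is that this Jensen step controls the $\theta$-weighted average of the $f$-values rather than the uniform average that defines $P_F^{(U)}$, so a second comparison is needed to bridge the two averages. Since $\theta_\lambda \mapsto f(1/(M\theta_\lambda))$ is monotonically increasing in $\theta_\lambda$, the sequences $\{\theta_\lambda\}$ and $\{f(1/(M\theta_\lambda))\}$ are similarly ordered, so I would invoke Chebyshev's sum inequality to relate $\sum_\lambda \theta_\lambda f(\cdot)$ to $\left(\sum_\lambda \theta_\lambda\right)\left(\frac{1}{M}\sum_\lambda f(\cdot)\right)$ and try to chain the two bounds into $P_F^{(U)} \geq P_F^{(W)}$. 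Checking that the directions of Jensen and Chebyshev combine correctly is where I expect the real work to lie; if these two elementary inequalities do not chain in the right way, a fallback is a perturbation argument that varies a single $\theta_\lambda$ while holding an appropriate functional (such as $S$) fixed and tracks the sign of $P_F^{(U)} - P_F^{(W)}$ along the path, with strict inequality breaking only at the symmetric point $\theta_1 = \cdots = \theta_M$.
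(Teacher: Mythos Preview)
Your instinct that the two inequalities may not chain is correct: they do not. Chebyshev's sum inequality, applied to the similarly-ordered sequences $\{\theta_\lambda\}$ and $\{f(1/(M\theta_\lambda))\}$, yields
\[
\frac{1}{M}\sum_{\lambda} \theta_\lambda\, f\!\left(\tfrac{1}{M\theta_\lambda}\right)\;\ge\;\frac{S}{M}\cdot\frac{1}{M}\sum_{\lambda} f\!\left(\tfrac{1}{M\theta_\lambda}\right),
\]
i.e.\ $\sum_\lambda \frac{\theta_\lambda}{S} f(1/(M\theta_\lambda)) \ge P_F^{(U)}$. Your weighted Jensen step gives $\sum_\lambda \frac{\theta_\lambda}{S} f(1/(M\theta_\lambda)) \ge P_F^{(W)}$. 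Both are \emph{lower} bounds on the same $\theta$-weighted average, so no chaining produces $P_F^{(U)}\ge P_F^{(W)}$. Your perturbation fallback cannot rescue this either.

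The paper's argument is different and shorter: it rewrites $P_F^{(W)} = g(\bar\theta)$ with $g(\theta)=(1-1/(M\theta))^K$ and $\bar\theta=\tfrac{1}{M}\sum_\lambda\theta_\lambda$, then applies Jensen to $g$ with \emph{uniform} weights to get $g(\bar\theta)\le \tfrac{1}{M}\sum_\lambda g(\theta_\lambda)=P_F^{(U)}$ in one step, avoiding the weight mismatch you ran into. However, that step needs $g$ convex in $\theta$, and it is not: a direct computation gives $g''(\theta)$ proportional to $(K+1)/(M\theta)-2$, so $g$ is concave whenever $\theta>(K+1)/(2M)$. Indeed the stated inequality can fail outright: with $M=2$, $K=2$, $\theta_1=1$, $\theta_2=3$ one finds $P_F^{(W)}=(3/4)^2=9/16$ but $P_F^{(U)}=\tfrac{1}{2}\bigl((1/2)^2+(5/6)^2\bigr)=17/36<9/16$. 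So the obstruction you hit is not a flaw in your plan but a reflection of the fact that the claim, as stated, is not true without further hypotheses (for instance, $K$ large relative to the $M\theta_\lambda$).
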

\begin{proof}
	By equation (\ref{eq:pf_nu}), the probability of failure of the weighted scheme is given by:
	\begin{equation*}
	P_F^{(W)} = \left(1-\frac{1}{\sum_{\lambda=1}^M \theta_\lambda}\right)^K = \left(1-\frac{1}{M\sum_{\lambda=1}^M \frac{1}{M} \theta_\lambda}\right)^K \\
	\end{equation*}
	An application of Jensen's inequality reveals:
	\begin{equation*}
	P_F^{(W)} \leq \frac{1}{M}\sum_{\lambda=1}^M \left(1-\frac{1}{M\theta_\lambda}\right)^K  = P_F^{(U)},
	\end{equation*}
	with equality if and only if $\theta_1=\theta_2=\ldots=\theta_M$.
\end{proof}

\paragraph{Relationship to proposed sampling distribution (\ref{eq:heuristic})} The proposed weighted sampling distribution given in (\ref{eq:heuristic}) can be interpreted as a distribution of the form (\ref{eq:theta_samp}) where $u(\lambda)_n-l(\lambda)_n=2$ for all models $\lambda=1,\ldots,M$ and all hyperparameters $n=1\ldots,N_\lambda$. While this approximation does not take into account the hyperparameter ranges, these are unknown in practice and we will show in Section \ref{sec:experiments} that this heuristic leads to a significantly better solution than uniform model sampling, also in the case when models have a mix of continuous and categorical hyperparameters.  

\section{Statistical Comparison of AutoML Methods}

In this section, we will review what statistical techniques are being used in the AutoML literature today, discuss how they relate to the broader literature on statistical comparisons of ML methods, and finally provide a clear recommendation of how multiple AutoML methods should be compared across a collection of datasets.

\subsection{Review of statistical analysis in the AutoML literature}

Statistical analysis presented in the AutoML literature can generally be separated into two distinct categories. There are papers which compare competing methods on a per-dataset basis (i.e., asserting that method A is statistically different to method B on each individual dataset) and those that compare competing methods across a collection of datasets (i.e., verifying that method A is statistically different to method B across the entire collection). 

In terms of the per-dataset approach, there is little consensus on how this comparison should be performed. In \cite{Thornton:2013:ACS:2487575.2487629}, normal evaluation metrics are simply presented in a table with no statistical hypothesis testing to validate if differences are significant. 
Whereas in \cite{DBLP:journals/corr/abs-1807-01774} the average regret is plotted for each method along with the standard deviation measured over multiple runs.
Formal statistical significance testing is performed in \cite{NIPS2015_5872} and \cite{Feurer:2015:IBH:2887007.2887164} using a bootstrap test and a two-sided t-test respectively, although it is not clear whether the p-values have been adjusted to account for the many hypotheses tested.
Finally, in \cite{pmlr-v64-olson_tpot_2016} the authors use the Mann-Witney $U$ test to determine statistically significant wins/losses on each dataset and apply the Bonferroni correction to account for multiple hypotheses. 

Similarly, when AutoML methods are compared across a collection of datasets, there is again little agreement. In \cite{DBLP:journals/corr/abs-1808-06492} metrics such as F1-score or MSE are simply averaged over the collection of datasets, which is problematic since the MSE of a \textit{difficult} dataset may not be directly comparable with the MSE of an \textit{easier} one. In \cite{DBLP:journals/corr/LiJDRT16} and \cite{Feurer:2015:IBH:2887007.2887164} the average ranks of the competing methods are compared, where the average is computed across a collection of datasets, but no statistical analysis is performed to determine whether the differences in rank are indeed significant or not. Finally, in \cite{DBLP:journals/corr/abs-1808-03233}, average ranks are presented along with a standard deviation.

\subsection{What does the broader ML and statistics literature recommend?}

In the widely-cited paper of \cite{Demsar2006} it is argued that it is preferable to compare machine learning methods statistically across a collection of datasets, rather than on a per-dataset basis. The argument provided is that by performing statistical significance testing on a per-dataset basis and counting the number of \textit{significant wins}, one is implicitly assuming that each test can distinguish between a random and a non-random difference. This is not the case, the test can only state the improbability of the observed event assuming that the null hypothesis was correct. Furthermore, in order to apply methods like the Student t-test on a single dataset, one must somehow generate repeated measurements (e.g. by resampling the training/test set) which often violates the underlying assumptions required to apply the test such as normality and/or independence \cite{Dietterich1998}.

When comparing two ML methods across multiple datasets, the approach recommended by \cite{Demsar2006} is to apply the Wilcoxon signed-ranks test. This non-parametric test makes fewer assumptions relative to a Student t-test, and is able to take into account the relative magnitude of differences (as opposed to the simpler sign test). When comparing more than two ML methods one should first apply a family-wise hypothesis test (e.g. the Friedman test with Iman and Davemport extension), and once it has been determined that differences exist within the family, one should proceed to apply pairwise post-hoc testing, 
In this context it is common practice to use the mean-ranks test \cite{nemenyi1962distribution} as a post-hoc test. 
However, it was recently demonstrated in \cite{benavoli2016} that this approach leads to results that depend strongly on the number of methods included in the pool and that by adding or removing methods one can arrive at contradictory conclusions. 
Instead, the authors recommend to use the Wilcoxon signed-ranks test, as recommended when comparing only two methods, while applying appropriate correction techniques to account for multiple hypotheses.

A rigorous evaluation of different correction techniques was further provided in \cite{Garcia2008} and \cite{garcia2010}. 
The most simple such technique is the Bonferroni correction, although it has relatively lower power.
Conversely, the Hommel and the Rom procedure are considered the most powerful, at the expense of significant computational complexity.
The Finner correction is also powerful but is vastly simpler, and is thus recommended in most cases.

\subsection{Proposed method}\label{sec:stats_proc}

Based on the above literature review, we propose the following guidelines for comparing $K>2$ AutoML methods across a collection of datasets:
\begin{enumerate}
	\item Apply the omnibus test (Friedman test with Iman and Davemport extension) to determine whether at least one method performs differently to the others.
	\item Construct a $K\times K$ matrix of raw p-values arising from all-to-all pairwise comparisons using the Wilcoxon signed-rank test across datasets.
	\item Apply the Finner correction to the above matrix to account for multiple hypotheses.
\end{enumerate}
All of the above methods are implemented in the R package \textsc{scmamp} \cite{calvo2016scmamp}, which we will make extensive use of in the following section.

\section{Experimental Results}
\label{sec:experiments}

In this section we will compare different model-free solutions to the CASH problem, with and without weighted sampling, across a collection of 67 datasets.

\begin{table}[ht]
\begin{center}
\begin{tabular}{ ccc }
 \hline
  & num ex & num ft \\
 \hline
 count & 67     & 67 \\
 mean & 1965.05 & 24.90 \\
 std & 1212.33  & 19.35 \\
 min & 1000     & 1 \\
 25\% & 1000.00 & 8.50 \\
 50\% & 1563.00 & 20.00 \\
 75\% & 2330.50 & 38.50 \\
 max & 5000     & 76 \\
 \hline
\end{tabular}
\caption{Summary of OpenML Datasets under study.}
\label{tab:datasets}
\end{center}
\end{table}

\subsection{Experimental Design}

All datasets were obtained from the OpenML platform \cite{OpenML2013} and their characteristics are summarized in Table \ref{tab:datasets}.
A complete list of OpenML dataset IDs is provided in Appendix \ref{app:ids}, and the pre-processing scheme used is provided in Appendix \ref{app:pp}.
All datasets correspond to binary classification problems.
In terms of model selection, we consider a pool of 11 different models. 
The models and their hyperparameters are summarized in Table \ref{tab:models}.
Since the datasets are relatively small, the performance of each optimization method is evaluated using many repetitions in a nested manner.
Firstly, we create a stratified train/test split of each dataset.
We then perform 10 different stratified splits of the training set to create a collection of 10 different train/validation sets.
Internally in each optimization framework, every configuration is trained on each of the 10 training sets, and the validation loss is evaluated on the corresponding validation sets.
The validation loss used to compare different configurations is taken to be the average over all train/validation splits.
Once the best configuration has been identified, we then re-train it using the full training set, and evaluate the loss function on the test set.
The above process is repeated 10 times (with different train/test splits) and the average generalization performance is reported.
In all experiments the logistic loss is used as a loss function.

\begin{table}[ht]
\begin{center}
\resizebox{0.95\columnwidth}{!}{
\begin{tabular}{ lcccc }
 \hline
 Name & \# hp & \# cat & \# int & \# cont \\
 \hline
 RandomForestClassifier & 8 & 3 & 4 & 1 \\
 LogisticRegression & 6 & 4 & 0 & 2 \\
 XGBoost & 11 & 2 & 3 & 6 \\
 GradientBoostingClassifier & 10 & 3 & 4 & 3 \\ 
 AdaBoostClassifier & 2 & 0 & 1 & 1 \\
 BernoulliNB & 3 & 1 & 1 & 1 \\
 GaussianNB & 1 & 0 & 0 & 1 \\
 ExtraTreesClassifier & 8 & 4 & 3 & 1 \\
 KNeighborsClassifier & 3 & 2 & 1 & 0 \\
 LinearDiscriminantAnalysis & 4 & 1 & 1 & 2 \\
 QuadraticDiscriminantAnalysis & 1 & 0 & 0 & 1 \\
 \hline
\end{tabular}}
\caption{Classification Models. For XGBoost we have used the xgboost v0.82 library and for the rest of the classifiers we have used scikit-learn v0.21.2.}
\label{tab:models}
\end{center}
\end{table}

\subsection{Exploration vs Exploitation} \label{exploration_vs_exploitation}

In our first comparison, we compare the three different SH schedules defined in Table \ref{tab:brackets} with a budget of $33$, so that in the most explorative schedule $n_0=99$ configurations are evaluated in the first rung. 
For each schedule, we evaluate SH with uniform model sampling and also with the weighted model sampling defined in equation (\ref{eq:heuristic}). 
The hyperparameters for each model are sampled uniformly from a fixed range in both cases (possibly with some logarithmic transformations). 
We are therefore comparing 6 different schemes across a collection of 67 datasets. 

We will follow the statistical approach defined in Section \ref{sec:stats_proc} to compare these schemes for both the validation loss and the generalization loss. 
Firstly, we perform the omnibus test and find that for both the validation and the generalization results, the p-value is very small ($<2.2e^{-16}$), indicating that differences do indeed exist between the family of 6 schemes.
Next, we compute the matrix of p-values for all pairwise comparisons using the Wilcoxon signed rank test and apply the Finner correction to account for the multiple hypotheses tested.
The matrices of corrected p-values are presented in full in Appendix \ref{app:eval}, where it can be seen that all p-values are less than a threshold of $0.05$, indicating that the null hypothesis can be rejected for all pairwise comparisons. 

In terms of the relative performance, the average ranks for the 6 schemes are displayed in Figure \ref{fig:res_1x_val} and Figure \ref{fig:res_1x_test} for the validation loss and generalization loss respectively. The conclusions we can draw are as follows:
\begin{enumerate}
	\item The relative order of all 6 schemes is consistent across validation and generalization loss.
	\item The more explorative schedules of SH consistently out-perform the more exploitative schedules.
	\item The weighted model sampling (denoted by SH\{0,1,2\}.W) improves the average rank of all three schedules.
\end{enumerate}

\begin{figure}[!htb]
	\center{\includegraphics[width=\columnwidth]{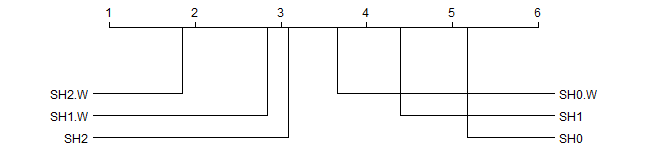}}
	\caption{Average rank in validation loss for the three different schedules of SH (lower is better).}
	\label{fig:res_1x_val} 
\end{figure}
\begin{figure}[!htb]
	\center{\includegraphics[width=\columnwidth]{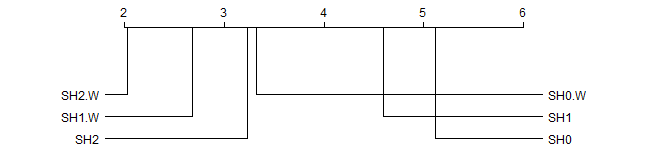}}
	\caption{Average rank in generalization loss for the three different schedules of SH (lower is better).}
	\label{fig:res_1x_test}
\end{figure}

\subsection{Hyperband Evaluation} \label{hyperband_evaluation}

In this section we would like to assess how effectively Hyperband can automate the choice between exploration and exploitation.
Specifically, we will evaluate an instance of Hyperband consisting of the three brackets defined in Table \ref{tab:brackets}, again with the parameter $n=33$, thus having a budget of $99$ in total.
For comparison, we will compare with RS and the most explorative SH schedule, both with an equivalent budget (e.g. RS samples $n=99$ configurations).
Based on the results of the previous section, we have a strong indication that explorative schedules are well suited to the CASH problem. 
The question we would like to ask is the following: can Hyperband, without this knowledge, perform similarly well to an explorative schedule of SH with an equivalent budget?
We will also evaluate each of the three methods (Hyperband, RS and SH) with and without weighted model sampling.

\begin{figure}[!htb]
	\center{\includegraphics[width=\columnwidth]{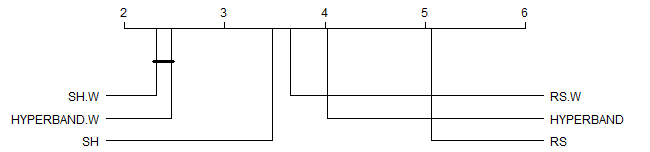}}
	\caption{Average rank in validation loss for the Hyperband, SH and RS (lower is better).}
	\label{fig:res_3x_val} 
\end{figure}

\begin{figure}[!htb]
	\center{\includegraphics[width=\columnwidth]{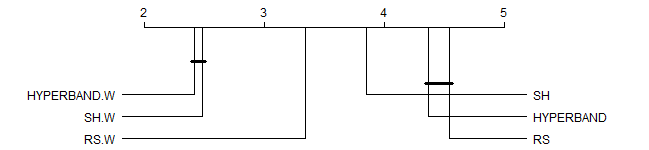}}
	\caption{Average rank in generalization loss for the Hyperband, SH and RS (lower is better).}
	\label{fig:res_3x_test} 
\end{figure}

As before, we firstly perform the omnibus test and find that for both the validation and the generalization results, the p-value is very small ($<2.2e^{-16}$), again indicating that differences do indeed exist between the family of 6 schemes. The matrices of corrected p-values for all of the pairwise comparisons can also be found in Appendix \ref{app:eval}. The results are summarized in Figure \ref{fig:res_3x_val} and Figure \ref{fig:res_3x_test} for the validation loss and generalization loss respectively. In these figures we again plot the average rank, and denote with horizontal bars the schemes for which the null hypothesis could not be rejected (i.e., the p-value for the pairwise comparison was $>0.05$).
 The conclusions we can draw are as follows:
\begin{enumerate}
	\item Again, the relative order of all 6 schemes is consistent across validation and generalization loss.
	\item Without weighted model sampling, explorative SH outperforms both RS and Hyperband, and in the generalization loss the differences between Hyperband and RS are not statistically significant (somewhat confirming the results of \cite{DBLP:journals/corr/LiJDRT16}).
	\item Using weighted model sampling, the performance of Hyperband significantly improves, and in fact the differences between Hyperband and SH are not significantly different either in validation loss or generalization loss.
\end{enumerate}

We have observed that with weighted model sampling, Hyperband is very effective: the practitioner that uses Hyperband, without the knowledge that $SH2$ is likely to work well, is likely to obtain similar results to the practitioner that tries to explicitly optimize the explore/exploit trade-off. However, the same assertion cannot be made if one was to apply Hyperband with uniform model sampling. Why might this be the case? 
In Figure \ref{fig:brackets}, we show the total number of times the different brackets of Hyperband produce winning configurations, for all of the datasets and all of the train/test repetitions, with and without weighted sampling. As expected, we see that the most explorative brackets win most of the time. However, when using weighted model sampling, we observe that the distribution becomes more \textit{equalized}. The ideal situation for applying Hyperband would be when such a distribution is uniform (i.e., one never knows whether explorative vs. exploitative brackets are preferable), thus explaining why the performance of Hyperband improves significantly (relative to explorative SH of an equivalent budget) when using weighted sampling. 

\begin{figure}[!htb]
	\center{\includegraphics[width=\columnwidth]{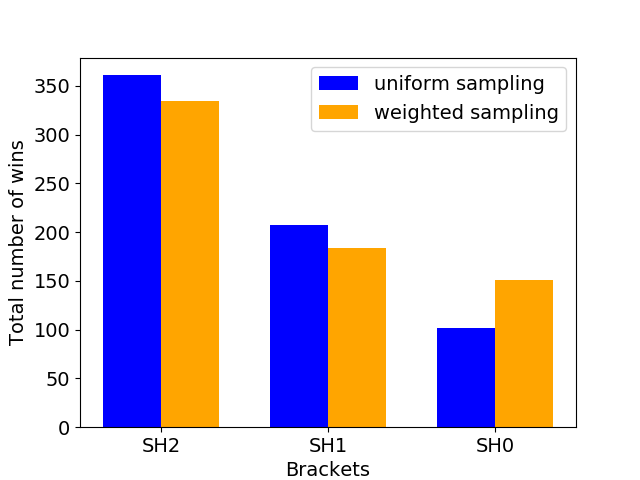}}
	\caption{Total number of times each bracket outputs the winning configuration (for all 67 datasets and 10 train/test splits).}
	\label{fig:brackets} 
\end{figure}

\section{Conclusion}

We propose a weighted sampling distribution for model-free optimization of the combined algorithm selection and hyperparameter tuning problem.
We prove theoretically, under worst-case assumptions, that this distribution out-performs uniform sampling.
Additionally, we recommend a robust procedure for statistical comparison of competing AutoML frameworks over a collection of datasets.
We then evaluate the performance of RS, SH and Hyperband with and without weighted model sampling.
Our findings are threefold: (a) weighted sampling improves performance of all three schemes, (b) explorative SH schedules tend to out-perform exploitative schedules, and (c) that weighted sampling effectively enables Hyperband to successfully automate the explore/exploit trade-off.

\appendix

\section{OpenML IDs} 
\label{app:ids}
In this work we used datasets with the following OpenML IDs:
3, 31, 44, 715, 720, 723, 728, 737, 740, 741, 743, 751, 772, 797, 799, 806, 813, 837, 845, 849, 866, 871, 903, 904, 910, 912, 913, 914, 917, 934, 953, 958, 962, 971, 979, 983, 991, 995, 1020, 1049, 1050, 1067, 1068, 1444, 1453, 1462, 1487, 1494, 1504, 1547, 1558, 40646, 40647, 40648, 40649, 40650, 40680, 40701, 40702, 40704, 40706, 40713, 40983, 40999, 41005, 41007, 41156.

\section{Dataset Preprocessing} 
\label{app:pp}
We applied the sklearn.impute.SimpleImputer to fill any missing values with the most frequent value and then we applied sklearn.preprocessing.LabelEncoder to all the categorical features.

\section{Statistical Results} 
\label{app:eval}

\begin{table}[ht]
\begin{center}
\resizebox{0.95\columnwidth}{!}{
\begin{tabular}{ l|cccccc }
 \hline
  & $SH_0$ & $SH_0$.W & $SH_1$ & $SH_0$.W & $SH_2$ & $SH_2$.W \\
 \hline
  $SH_0$    & NA & & & & & \\
  $SH_0$.W  & 3.9e-8 & NA & & & & \\
  $SH_1$    & 3.9e-8 & 1.2e-6 & NA & & & \\
  $SH_1$.W  & 3.9e-8 & 1.8e-6 & 1.3e-7 & NA & & \\
  $SH_2$    & 3.9e-8 & 2.5e-2 & 3.9e-8 & 3.3e-4 & NA & \\
  $SH_2$.W  & 3.9e-8 & 6.4-7 & 1.8e-7 & 1.2e-6  &1.3e-6 & NA \\
 \hline
\end{tabular}}
\caption{Pairwise corrected p-values for the experiments in Section \ref{exploration_vs_exploitation} (Validation Loss).}
\label{tab:1x_val}
\end{center}
\end{table}

\begin{table}[ht]
\begin{center}
\resizebox{0.95\columnwidth}{!}{
\begin{tabular}{ l|cccccc }
 \hline
  & $SH_0$ & $SH_0$.W & $SH_1$ & $SH_0$.W & $SH_2$ & $SH_2$.W \\
 \hline
  $SH_0$    & NA & & & & & \\
  $SH_0$.W  & 4.1e-9 & NA & & & & \\
  $SH_1$    & 8.6e-7 & 1.1e-7 & NA & & & \\
  $SH_1$.W  & 4.5e-9 & 8.7e-4 & 2.3e-8 & NA & & \\
  $SH_2$    & 3.5e-8 & 8.5e-3 & 5.6e-8 & 1.5e-4 & NA & \\
  $SH_2$.W  & 2.3e-8 & 5.2e-6 & 4.3e-8 & 1.4e-4 & 2.4e-6 & NA \\
 \hline
\end{tabular}}
\caption{Pairwise corrected p-values for the experiments in Section \ref{exploration_vs_exploitation} (Test Loss).}
\label{tab:1x_test}
\end{center}
\end{table}

\begin{table}[ht]
\begin{center}
\resizebox{0.95\columnwidth}{!}{
\begin{tabular}{ l|cccccc }
 \hline
  & RS & RS.W & HB & HB.W & SH & SH.W \\
 \hline
  RS    & NA & & & & & \\
  RS.W  & 8.3e-7 & NA & & & & \\
  HB    & 9.2e-5 & 2.4e-5 & NA & & & \\
  HB.W  & 1.3e-7 & 1.1e-7 & 8.3e-7 & NA & & \\
  SH    & 6.7e-6 & 6.4e-4 & 3.7e-4 & 7.3e-6 & NA & \\
  SH.W  & 9.9e-7 & 7.3e-6 & 7.3e-6 & \textbf{3.4e-1} & 7.3e-6 & NA \\
 \hline
\end{tabular}}
\caption{Pairwise corrected p-values for the experiments in Section \ref{hyperband_evaluation} (Validation Loss). Boldface indicates values larger than $0.05$.}
\label{tab:3x_val}
\end{center}
\end{table}

\begin{table}[ht]
\begin{center}
\resizebox{0.95\columnwidth}{!}{
\begin{tabular}{ l|cccccc }
 \hline
  & RS & RS.W & HB & HB.W & SH & SH.W \\
 \hline
  RS   & NA & & & & & \\
  RS.W  & 3.0e-6 & NA & & & & \\
  HB    & \textbf{9.8e-2} & 3.0e-6 & NA & & & \\
  HB.W  & 7.1e-7 & 1.1e-6 & 7.1e-7 & NA & & \\
  SH    & 8.2e-4 & 2.1e-5 & 3.9e-3 & 7.1e-7 & NA & \\
  SH.W  & 2.2e-6 & 4.1e-4 & 9.0e-7 & \textbf{1.9e-1} & 7.1e-7 & NA \\
 \hline
\end{tabular}}
\caption{Pairwise corrected p-values for the experiments in Section \ref{hyperband_evaluation} (Test Loss). Boldface indicates values larger than $0.05$.}
\label{tab:3x_test}
\end{center}
\end{table}

\bibliography{AAAI-SarigiannisD.7690}
\bibliographystyle{aaai}
\end{document}